\relax
\documentclass[letterpaper]{article} 
\usepackage{aaai19}  
\usepackage{times}  
\usepackage{helvet}  
\usepackage{courier}  
\usepackage{url}  
\usepackage{graphicx}  
\frenchspacing  
\setlength{\pdfpagewidth}{8.5in}  
\setlength{\pdfpageheight}{11in}  
\usepackage{amssymb}
\usepackage{amsmath}
\usepackage{amsthm}
\usepackage{graphicx}
\usepackage{float}
\newtheorem{theorem}{Theorem}

\newcommand{\tensor}[1]{\boldsymbol{\mathcal{#1}}}

\newcommand{\mat}[1]{\mathbf{#1}}
\newcommand{\vect}[1]{\mathbf{#1}}

\pdfinfo{
/Title (2019 Formatting Instructions for Authors Using LaTeX)
/Author (AAAI Press Staff)}
\setcounter{secnumdepth}{0}  
 \begin{document}
%
\title{Tensor Ring Decomposition with Rank Minimization on Latent Space: An Efficient Approach for Tensor Completion \thanks{This work was supported by JSPS KAKENHI (Grant No. 17K00326, 15H04002, 18K04178), JST CREST (Grant No. JPMJCR1784), and the National Natural Science Foundation of China (Grant No. 61773129). \newline \quad$^\dagger$Corresponding authors: qibin.zhao@riken.jp, cao@sit.ac.jp}}

\author{Longhao Yuan\textsuperscript{1,2},
Chao Li\textsuperscript{2},
Danilo Mandic\textsuperscript{5},
Jianting Cao\textsuperscript{1,2,4,$\dagger$}, Qibin Zhao\textsuperscript{2,3,$\dagger$} \\
\textsuperscript{1}{Graduate School of Engineering, Saitama Institute of Technology, Japan}\\
\textsuperscript{2}{Tensor Learning Unit, RIKEN Center for Advanced Intelligence Project (AIP), Japan}\\
\textsuperscript{3}{School of Automation, Guangdong University of Technology, China}\\
\textsuperscript{4}{School of Computer Science and Technology, Hangzhou Dianzi University, China}\\
\textsuperscript{5}{Department of Electrical and Electronic Engineering, Imperial College London, United Kingdom}\\
}

\maketitle

\begin{abstract}
In tensor completion tasks, the traditional low-rank tensor decomposition models suffer from the laborious model selection problem due to their high model sensitivity. In particular, for tensor ring (TR) decomposition, the number of model possibilities grows exponentially with the tensor order, which makes it rather challenging to find the optimal TR decomposition. In this paper, by exploiting the low-rank structure of the TR latent space, we propose a novel tensor completion method which is robust to model selection. In contrast to imposing the low-rank constraint on the data space, we introduce nuclear norm regularization on the latent TR factors, resulting in the optimization step using singular value decomposition (SVD) being performed at a much smaller scale.  By leveraging the alternating direction method of multipliers (ADMM) scheme, the latent TR factors with optimal rank and the recovered tensor can be obtained simultaneously. Our proposed algorithm is shown to effectively alleviate the burden of TR-rank selection, thereby greatly reducing the computational cost. The extensive experimental results on both synthetic and real-world data demonstrate the superior performance and efficiency of the proposed approach against the state-of-the-art algorithms.
\end{abstract}

\section{Introduction}
Tensor decompositions aim to find latent factors in tensor-valued data (i.e., the generalization of multi-dimensional arrays), thereby casting large-scale and intractable tensor problems into a multilinear tensor latent space of low-dimensionality (very few degrees of freedom designated by the rank). The latent factors within tensor decomposition can be considered as the latent features of data, which makes them an ideal set of bases to predict missing entries when the acquired data is incomplete. The specific forms and operations among latent factors determine the type of tensor decomposition. The most classical and successful tensor decomposition models are the Tucker decomposition (TKD) and the CANDECOMP/PARAFAC (CP) decomposition \cite{kolda2009tensor}. More recently, the matrix product state/tensor-train (MPS/TT) decomposition has become very attractive, owing to its super-compression and computational efficiency properties \cite{oseledets2011tensor}. Currently, a generalization of TT decomposition, termed the tensor ring (TR) decomposition, has been studied across scientific disciplines \cite{zhao2016tensor,zhao2018learning}. These tensor decomposition models have found application in various fields such as machine learning \cite{wang2018wide,novikov2015tensorizing,anandkumar2014tensor,kanagawa2016gaussian}, signal processing \cite{cong2015tensor}, image/video completion \cite{liu2013tensor,zhao2016bayesian}, compressed sensing \cite{gandy2011tensor}, to name but a few. Tensor completion is one of the most important applications of tensor decompositions, with the goal to recover an incomplete tensor from partially observed entries. The theoretical lynchpin in tensor completion problems is the tensor low-rank assumption, and the methods can mainly be categorized into two types: 
(\textrm{i}) tensor-decomposition-based approach and (\textrm{ii}) rank-minimization-based approach.

Tensor decomposition based methods find latent factors of tensor using the incomplete tensor, and then the latent factors are used to predict the missing entries. Many completion algorithms have been proposed based on alternating least squares (ALS) method \cite{grasedyck2015variants,wang2017efficient}, gradient-based method \cite{yuan2017completion,acar2011scalable}, to mention but a few. Though ALS and gradient-based algorithms are free from burdensome hyper-parameter tuning, the performance of these algorithms is rather sensitive to model selection, i.e., rank selection of the tensor decomposition. Moreover, since the optimal rank is generally data-dependent, it is very challenging to specify the optimal rank beforehand. This is especially the case for Tucker, TT, and TR decompositions, for which the rank is defined as a vector; it is therefore impossible to find the optimal ranks by cross-validation due to the immense possibilities. 

Rank minimization based methods employ convex surrogates to minimize the tensor rank. One of the most commonly-used surrogates is the nuclear norm (a.k.a. Schatten norm, or trace norm), which is defined as the sum of singular values of a matrix and it is the most popular convex surrogate for rank regularization. Based on different definitions of tensor rank, various nuclear norm regularized algorithms have been proposed \cite{liu2013tensor,imaizumi2017tensor,liu2014generalized,liu2015trace}. Rank minimization based methods do not need to specify the rank of the employed tensor decompositions beforehand, and the rank of the recovered tensor will be automatically learned from the limited observations. However, these algorithms face multiple large-scale singular value decomposition (SVD) operations on the 2D unfoldings of the tensor when employing the nuclear norm and numerous hyper-parameter tuning, which in turn leads to high computational cost and low efficiency. 

To address the problems of high sensitivity to rank selection and low computational efficiency which are inherent in traditional tensor completion methods, in this paper, we propose a new algorithm named tensor ring low-rank factors (TRLRF) which effectively alleviates the burden of rank selection and reduces the computational cost. By virtue of employing both nuclear norm regularization and tensor decomposition, our model provides performance stability and high computational efficiency. The proposed TRLRF is efficiently solved by the ADMM algorithm and it simultaneously achieves both the underlying tensor decomposition and completion based on TR decomposition. Our main contributions in this paper are:
\begin{itemize}
\item A theoretical relationship between the multilinear tensor rank and the rank of TR factors is established, which allows the low-rank constraint to be performed implicitly on TR latent space. This has led to fast SVD calculation on small size factors. 

\item The nuclear norm is further imposed to regularize the TR-ranks, which enables our algorithm to always obtain a stable solution, even if the TR-rank is inappropriately given. This highlights rank-robustness of the proposed TRLRF algorithm. 

\item An efficient algorithm based on ADMM is developed to optimize the proposed model, so as to obtain the TR-factors and the recovered tensor simultaneously.
\end{itemize}


\section{Preliminaries and Related Works}
\subsection{Notations}
The notations in \cite{kolda2009tensor} are adopted in this paper. A scalar is denoted by a standard lowercase letter or a uppercase letter, e.g., $x, X \in\mathbb{R}$, and a vector is denoted by a boldface lowercase letter, e.g., $\vect{x}\in\mathbb{R}^{I}$. A matrix is denoted by a boldface capital letter, e.g., $\mat{X}\in\mathbb{R}^{I\times J}$. A tensor of order $N\geq 3$ is denoted by calligraphic letters, e.g., $\tensor{X}\in\mathbb{R}^{I_1\times I_2\times\cdots \times I_N}$. The set $\{ \tensor{X}^{(n)}\}_{n=1}^N:=\{ \tensor{X}^{(1)},\tensor{X}^{(2)},\ldots,\tensor{X}^{(N)}\} $ denotes a tensor sequence, with $\tensor{X}^{(n)}$ being the $n$-th tensor of the sequence. Where appropriate, a tensor sequence can also be written as $[\tensor{X}]$. The representations of matrix sequences and vector sequences are designated in the same way. An element of  a tensor $\tensor{X}  \in\mathbb{R}^{I_1\times I_2\times\cdots \times I_N}$ of index $(i_{1},i_{2},\ldots,i_{N})$ is denoted by $\tensor{X}(i_{1},i_{2},\ldots, i_{N})$ or $x_{i_{1}i_{2}\ldots i_{N}}$. The inner product of two tensors $\tensor{X}$, $\tensor{Y}$ with the same size $\mathbb{R}^{I_1\times I_2\times\cdots \times I_N}$ is defined as $\langle \tensor{X},\tensor{Y} \rangle=\sum_{i_1}\sum_{i_2}\cdots\sum_{i_N}x_{i_1 i_2\ldots i_N}y_{i_1 i_2\ldots i_N}$. Furthermore, the Frobenius norm of $\tensor{X}$ is defined by $\left \| \tensor{X} \right \|_F=\sqrt{\langle \tensor{X},\tensor{X} \rangle}$.

We employ two types of tensor unfolding (matricization) operations in this paper. The standard mode-$n$ unfolding \cite{kolda2009tensor} of tensor $\tensor{X}  \in\mathbb{R}^{I_1\times I_2\times\cdots \times I_N}$ is denoted by $\mat{X}_{(n)}\in\mathbb{R}^{I_n \times  {I_1 \cdots I_{n-1} I_{n+1} \cdots I_N}}$. Another mode-$n$ unfolding of tensor $\tensor{X}$ which is often used in TR operations \cite{zhao2016tensor} is denoted by $\mat{X}_{<n>}\in\mathbb{R}^{I_n \times  {I_{n+1} \cdots I_{N} I_{1} \cdots I_{n-1}}}$. Furthermore, the inverse operation of unfolding is matrix folding (tensorization), which transforms matrices to higher-order tensors. In this paper, we only define the folding operation for the first type of mode-$n$ unfolding as $\text{fold}_n(\cdot)$, i.e., for a tensor $\tensor{X}$, we have $\text{fold}_n(\mat{X}_{(n)})=\tensor{X}$.

\subsection{Tensor ring decomposition}

The tensor ring (TR) decomposition is a more general decomposition model than the tensor-train (TT) decomposition. It represents a tensor of higher-order by circular multilinear products over a sequence of low-order latent core tensors, i.e., TR factors. For $n=1,\ldots,N$, the TR factors are denoted by $\tensor{G}^{(n)} \in\mathbb{R}^{R_{n} \times I_{n} \times R_{n+1}}$ and each consists of two rank-modes (i.e mode-$1$ and mode-$3$) and one dimension-mode (i.e., mode-$2$). The syntax $\{R_1, R_2, \ldots, R_N\}$ denotes the TR-rank which controls the model complexity of TR decomposition. The TR decomposition applies trace operations and all of the TR factors are set to be 3-order; thus the TR decomposition relaxes the rank constraint on the first and last core of TT to $R_1=R_{N+1}$. Moreover, TR decomposition linearly scales to the order of the tensor, and in this way it overcomes the `curse of dimensionality'. In this case, TR can be considered as a linear combination of TTs and hence offers a powerful and generalized representation ability. The element-wise relation of TR decomposition and the generated tensor is given by:
\begin{equation}
\label{tr_relation1}
\tensor{X}(i_1,i_2,\ldots,i_N)=\text{Trace}\left \{  \prod_{n=1}^N \mat{G}^{(n)}_{i_n} \right \},
\end{equation}
where $\text{Trace}\{ \cdot \}$ is the matrix trace operation, $ \mat{G}^{(n)}_{i_n} \in\mathbb{R}^{R_n\times R_{n+1}}$ is the $i_n$-th mode-$2$ slice matrix of $\tensor{G}^{(n)}$, which can also be denoted by $\tensor{G}^{(n)}(:,i_n,:)$ according to the Matlab notation.

\subsection{Tensor completion}

\subsubsection{Completion by TR decomposition}
Tensor decomposition based algorithms do not directly employ the rank constraint to the object tensor. Instead, they try to find the low-rank representation (i.e., tensor decompositions) of the incomplete data from the observed entries. The obtained latent factors of the tensor decomposition are used to predict the missing entries. For model formulation, the tensor completion problem is set as a weighted least squares (WLS) model. Based on different tensor decompositions, various tensor completion algorithms have been proposed, e.g., weighted CP \cite{acar2011scalable}, weighted Tucker \cite{filipovic2015tucker}, TRWOPT \cite{yuan2018higher} and TRALS \cite{wang2017efficient}. To the best of our knowledge, there are two proposed TR-based tensor completion algorithms: the TRALS and TRWOPT. They apply the same optimization model which is formulated as:
\begin{equation}
\min\limits_{[\tensor{G}]} \Vert P_{\Omega}(\tensor{T}-\Psi([\tensor{G}])) \Vert_F^2,
\label{tdc}
\end{equation}
where the optimization objective is the TR factors, $[\tensor{G}]$, $P_{\Omega}(\tensor{T})$ denotes all the observed entries w.r.t. the set of indices of observed entries represented by $\Omega$, and $\Psi([\tensor{G}])$ denotes the approximated tensor generated by $[\tensor{G}]$. Every element of $\Psi([\tensor{G}])$ is calculated by equation \eqref{tr_relation1}. The two algorithms are both based on the model in (\ref{tdc}). However, TRALS applies alternative least squares (ALS) method and TRWOPT uses a gradient-based algorithm to solve the model, respectively. They perform well for both low-order and high-order tensors due to the high representation ability and flexibility of TR decomposition. However, these algorithms are shown to suffer from high sensitiveness to rank selection, which would lead to high computational cost.

\subsubsection{Completion by nuclear norm regularization}
The model of rank minimization-based tensor completion can be formulated as:
\begin{equation}
\label{bmc}
\min \limits_{\tensor{X}} \ \  \text{Rank}(\tensor{X})+\frac{\lambda}{2}\Vert P_{\Omega}(\tensor{T}-\tensor{X})\Vert_F^2,
\end{equation}
where $\tensor{X}$ is the recovered low-rank tensor, and $\text{Rank}(\cdot)$ is a rank regularizer. The model can therefore find the low-rank structure of the data and approximate the recovered tensor. Because determining the tensor rank is an NP-hard problem \cite{hillar2013most,kolda2009tensor}, work in \cite{liu2013tensor} and \cite{signoretto2014learning} extends the concept of low-rank matrix completion and defines tensor rank as a sum of the rank of mode-$n$ unfolding of the object tensor. Moreover, the convex surrogate named nuclear norm is applied to the tensor low-rank model and it simultaneously regularizes all the mode-$n$ unfoldings of the object tensor. In this way, the model in \eqref{bmc} can be reformulated as:
\begin{equation}
\label{overlapped}
\min_{\tensor{X}}\sum_{n=1}^N\Vert \mat{X}_{(n)} \Vert_*+\frac{\lambda}{2}\Vert P_{\Omega}(\tensor{T}-\tensor{X})\Vert_F^2, 
\end{equation}
where $\Vert \cdot \Vert_*$ denotes the nuclear norm regularization in the form of a sum of the singular values of the matrix. Usually, the model is solved by ADMM algorithms and it is shown to have fast convergence and good performance when data size is small. However, when dealing with large-scale data, the multiple SVD operations in the optimization step will be intractable due to high computational cost.

\section{Tensor Ring Low-rank Factors}
To solve the issues traditional tensor completion methods have, we impose low-rankness on each of the TR factors and so that our basic tensor completion model is formulated as follow:
\begin{equation}
\label{ori_mod}
\begin{aligned}
\min \limits_{[\tensor{G}],\tensor{X}} \ \ &\sum_{n=1}^N \Vert \tensor{G}^{(n)}\Vert_*+ \frac{\lambda}{2}\Vert  \tensor{X}-\Psi([\tensor{G}])\Vert_F^2,\\&
s.t.\ P_\Omega(\tensor{X})=P_\Omega(\tensor{T}).
\end{aligned}
\end{equation}

To solve (\ref{ori_mod}), we first need to deduce the relation of the tensor rank and the corresponding core tensor rank, which can be explained by the following theorem.
\begin{theorem}
Given an $N$-th order tensor $\tensor{X}\in\mathbb{R}^{I_1\times I_2\times\cdots \times I_N}$ which can be represented by  equation (\ref{tr_relation1}), then the following inequality holds for all $n=1,\ldots,N$: 
\begin{equation}
\text{Rank}(\mat{G}_{(2)}^{(n)}) \geq \text{Rank}(\mat{X}_{(n)}).
\end{equation}
\end{theorem}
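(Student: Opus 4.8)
The plan is to exhibit the mode-$n$ unfolding of $\tensor{X}$ as a matrix that factors through $\mat{G}^{(n)}_{(2)}$, so that the inequality follows at once from the elementary fact that the rank of a matrix product cannot exceed the rank of either factor. Concretely, I aim to establish an identity of the form $\mat{X}_{<n>}=\mat{G}^{(n)}_{(2)}\mat{M}$, in which $\mat{M}$ is a matrix assembled entirely from the remaining factors $\{\tensor{G}^{(k)}\}_{k\neq n}$ and is independent of the index $i_n$.

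First I would invoke the cyclic invariance of the trace in equation (\ref{tr_relation1}) to bring the $n$-th slice to the front, writing
\[
\tensor{X}(i_1,\ldots,i_N)=\text{Trace}\left\{\mat{G}^{(n)}_{i_n}\,\mat{G}^{(\neq n)}_{j}\right\},
\]
where, for the multi-index $j=(i_{n+1},\ldots,i_N,i_1,\ldots,i_{n-1})$ collecting the remaining modes, $\mat{G}^{(\neq n)}_{j}:=\mat{G}^{(n+1)}_{i_{n+1}}\cdots\mat{G}^{(N)}_{i_N}\mat{G}^{(1)}_{i_1}\cdots\mat{G}^{(n-1)}_{i_{n-1}}\in\mathbb{R}^{R_{n+1}\times R_n}$ is the ``subchain'' obtained by multiplying all slices except the $n$-th. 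Next I would rewrite the trace of a product of two matrices as a Euclidean inner product of vectorizations, using $\text{Trace}\{\mat{A}\mat{B}\}=\text{vec}(\mat{A})^\top\text{vec}(\mat{B}^\top)$. Applied with $\mat{A}=\mat{G}^{(n)}_{i_n}$, this shows every entry of $\tensor{X}$ equals the inner product of the vectorized $n$-th slice with a vector that depends only on the subchain.

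Stacking the vectorized slices $\text{vec}(\mat{G}^{(n)}_{i_n})$ over $i_n=1,\ldots,I_n$ gives exactly the rows of the mode-$2$ unfolding $\mat{G}^{(n)}_{(2)}\in\mathbb{R}^{I_n\times R_nR_{n+1}}$, while collecting the subchain vectors $\text{vec}((\mat{G}^{(\neq n)}_{j})^\top)$ as the columns of a matrix $\mat{M}\in\mathbb{R}^{R_nR_{n+1}\times\prod_{k\neq n}I_k}$ produces the factorization $\mat{X}_{<n>}=\mat{G}^{(n)}_{(2)}\mat{M}$. Since $\text{Rank}(\mat{G}^{(n)}_{(2)}\mat{M})\leq\text{Rank}(\mat{G}^{(n)}_{(2)})$, I conclude $\text{Rank}(\mat{X}_{<n>})\leq\text{Rank}(\mat{G}^{(n)}_{(2)})$. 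Finally, because $\mat{X}_{(n)}$ and $\mat{X}_{<n>}$ consist of the same rows and differ only by a permutation of their columns, they have equal rank, which yields $\text{Rank}(\mat{G}_{(2)}^{(n)})\geq\text{Rank}(\mat{X}_{(n)})$ as claimed.

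I expect the main obstacle to be the careful index bookkeeping that certifies the factorization rather than any conceptual difficulty. One must confirm that the convention used to vectorize each slice into a row of $\mat{G}^{(n)}_{(2)}$ is matched consistently by the convention used to vectorize the transposed subchains into the columns of $\mat{M}$, and that the running multi-index $j$ ranges over precisely the same ordered set as the columns of $\mat{X}_{<n>}$. Once the single ``factor-out'' identity $\mat{X}_{<n>}=\mat{G}^{(n)}_{(2)}\mat{M}$ is verified with matching orderings, both the rank bound and the transition from $\mat{X}_{<n>}$ back to $\mat{X}_{(n)}$ are immediate.
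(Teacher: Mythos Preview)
Your proposal is correct and follows essentially the same route as the paper: establish the factorization $\mat{X}_{<n>}=\mat{G}^{(n)}_{(2)}\mat{M}$, apply the rank inequality for products, and then use that $\mat{X}_{(n)}$ and $\mat{X}_{<n>}$ differ only by a column permutation. The only difference is cosmetic: the paper quotes the identity $\mat{X}_{<n>}=\mat{G}_{(2)}^{(n)}(\mat{G}_{<2>}^{(\neq n)})^T$ directly from \cite{zhao2016tensor} (your $\mat{M}$ is precisely $(\mat{G}_{<2>}^{(\neq n)})^T$), whereas you rederive it via trace cyclicity and the vectorization identity.
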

\begin{proof}
For the $n$-th core tensor $\tensor{G}^{(n)}$, according to the work in \cite{zhao2016tensor}, we have:
\begin{equation}
\mat{X}_{<n>}=\mat{G}_{(2)}^{(n)}(\mat{G}_{<2>}^{(\neq n)})^T,
\end{equation}
where $\tensor{G}^{(\neq n)}\in\mathbb{R}^{R_{n+1}\times \prod_{i=1, i\neq n}^N I_i \times R_n}$ is a subchain tensor generated by merging all but the $n$-th core tensor. Hence, the relation of the rank satisfies:
\begin{equation}
\begin{aligned}
 \text{Rank}(\mat{X}_{<n>})& \leq \text{min}\{\text{Rank}(\mat{G}^{(n)}_{(2)}) ,\text{Rank}(\mat{G}_{<n>}^{(\neq n)})\}\\& \leq \text{Rank}(\mat{G}^{(n)}_{(2)}).
 \end{aligned}
 \end{equation}
 The proof is completed by
 \begin{equation}
\begin{aligned}
\text{Rank}(\mat{X}_{<n>})=\text{Rank}(\mat{X}_{(n)}) \leq \text{Rank}(\mat{G}^{(n)}_{(2)}).
 \end{aligned}
 \end{equation}
 \end{proof}
This theorem proves the relation between the tensor rank and the rank of the TR factors. The rank of mode-$n$ unfolding of the tensor $\tensor{X}$ is upper bounded by the rank of the dimension-mode unfolding of the corresponding core tensor $\tensor{G}^{(n)}$, which allows us to impose a low-rank constraint on $\tensor{G}^{(n)}$. By the new surrogate, our model (\ref{ori_mod}) is reformulated by:
\begin{equation}
\label{mod}
\begin{aligned}
\min \limits_{[\tensor{G}],\tensor{X}}  \ &\sum_{n=1}^N \Vert \mat{G}^{(n)}_{(2)} \Vert_*+ \frac{\lambda}{2}\Vert \tensor{X}-\Psi([\tensor{G}])\Vert_F^2\\&
s.t.\ P_\Omega(\tensor{X})=P_\Omega(\tensor{T}).
\end{aligned}
\end{equation}

The above model imposes nuclear norm regularization on the dimension-mode unfoldings of the TR factors, which can largely decrease the computational complexity compared to the algorithms which are based on model (\ref{overlapped}). Moreover, we consider to give low-rank constraints on the two rank-modes of the TR factors, i.e., the unfoldings of the TR factors along mode-$1$ and mode-$3$, which can be expressed by $\sum_{n=1}^N\Vert \mat{G}^{(n)}_{(1)}\Vert_*$+$\sum_{n=1}^N\Vert \mat{G}^{(n)}_{(3)}\Vert_*$. When the model is optimized, nuclear norms of the rank-mode unfoldings and the fitting error of the approximated tensor are minimized simultaneously, resulting in the initial TR-rank becoming the upper bound of the real TR-rank of the tensor, thus equipping our model with robustness to rank selection. The tensor ring low-rank factors (TRLRF) model can be finally expressed as:
\begin{equation}
\label{mod_o}
\begin{aligned}
\min \limits_{[\tensor{G}],\tensor{X}}  \ &\sum_{n=1}^N\sum_{i=1}^3 \Vert \mat{G}^{(n)}_{(i)} \Vert_*+ \frac{\lambda}{2}\Vert \tensor{X}-\Psi([\tensor{G}])\Vert_F^2\\&
s.t.\ P_\Omega(\tensor{X})=P_\Omega(\tensor{T}).
\end{aligned}
\end{equation}
Our TRLRF model has two distinctive advantages. Firstly, the low-rank assumption is placed on tensor factors instead of on the original tensor, this greatly reduces the computational complexity of the SVD operation. Secondly, low-rankness of tensor factors can enhance the robustness to rank selection, which can alleviate the burden of searching for optimal TR-rank and reduce the computational cost in the implementation.

\subsection{Solving scheme}
To solve the model in (\ref{mod_o}), we apply the alternating direction method of multipliers (ADMM) which is efficient and widely used \cite{boyd2011distributed}. Moreover, because the variables of TRLRF model are inter-dependent, we impose auxiliary variables to simplify the optimization. Thus, the TRLRF model can be rewritten as
\begin{equation}
\label{L_mod}
\begin{aligned}
&\min \limits_{[\tensor{M}], [\tensor{G}],\tensor{X}} \ \sum_{n=1}^N\sum_{i=1}^3 \Vert \mat{M}^{(n,i)}_{(i)} \Vert_*+ \frac{\lambda}{2}\Vert \tensor{X}- \Psi([\tensor{G}])\Vert_F^2 ,\\& \quad s.t. \  \mat{M}^{(n,i)}_{(i)}=\mat{G}^{(n)}_{(i)}, n=1,\ldots,N,\ i=1,2,3, \\&\quad\quad\;\; P_\Omega(\tensor{X})=P_\Omega(\tensor{T}),
\end{aligned}
\end{equation}
where $[\tensor{M}]:=\{\tensor{M}^{(n,i)}\}_{n=1,i=1}^{N,3}$ are the auxiliary variables of $[\tensor{G}]$. By merging the equal constraints of the auxiliary variables into the Lagrangian equation, the augmented Lagrangian function of TRLRF model becomes
\begin{equation}
\label{o_Lfunc}
\begin{aligned}
&L	\left( [\tensor{G}],\tensor{X},[\tensor{M}], [\tensor{Y}]\right)\\&=\sum_{n=1}^N\sum_{i=1}^3 \big(\Vert \mat{M}^{(n,i)}_{(i)} \Vert_*+<\tensor{Y}^{(n,i)}, \tensor{M}^{(n,i)}-\tensor{G}^{(n)}>\\  &+\frac{\mu}{2}\Vert  \tensor{M}^{(n,i)}-\tensor{G}^{(n)}\Vert_F^2 \big) +\frac{\lambda}{2}\Vert  \tensor{X}- \Psi([\tensor{G}])\Vert_F^2 ,\\
&\quad s.t.\ P_\Omega(\tensor{X})=P_\Omega(\tensor{T}),
\end{aligned}
\end{equation}
where $[\tensor{Y}]:=\{\tensor{Y}^{(n,i)}\}_{n=1,i=1}^{N,3}$ are the Lagrangian multipliers, and $\mu>0$ is a penalty parameter. For $n=1,\ldots,N$, $i=1,2,3$, $\tensor{G}^{(n)}$, $\tensor{M}^{(n,i)}$ and $ \tensor{Y}^{(n,i)}$ are each independent, so we can update them by the updating scheme below.

\subsubsection{Update of $\tensor{G}^{(n)}$.}
By using \eqref{o_Lfunc}, the augmented Lagrangian function w.r.t. ${\tensor{G}^{(n)}}$ can be simplified as
\begin{equation}
\label{LG_o}
\begin{aligned}
L(\tensor{G}^{(n)})&=\sum_{i=1}^3\frac{\mu}{2}\Big\Vert  \tensor{M}^{(n,i)}-\tensor{G}^{(n)}+\frac{1}{\mu}\tensor{Y}^{(n,i)}  \Big\Vert^2_F\\&+\frac{\lambda}{2}\big\Vert \tensor{X}-\Psi([\tensor{G}]) \big\Vert^2_F+C_{\tensor{G}},
 \end{aligned}
\end{equation}
where the constant $C_{\tensor{G}}$ consists of other parts of the Lagrangian function which is irrelevant to updating $\tensor{G}^{(n)}$. This is a least squares problem, so for $n=1, \ldots,N$, $\tensor{G}^{(n)}$ can be updated by
\begin{equation}
\label{LG_u}
\begin{aligned}
&\tensor{G}^{(n)}=\text{fold}_{2}\Big(\big(\sum_{i=1}^{3}(\mu \mat{M}_{(2)}^{(n,i)} +\mat{Y}_{(2)}^{(n,i)})\\&+\lambda \mat{X}_{<n>}\mat{G}^{(\neq n)}_{<2>} \big)\big(\lambda\mat{G}^{(\neq n),T}_{<2>}\mat{G}^{(\neq n)}_{<2>}+3\mu \mat{I}\big)^{-1}\Big),
 \end{aligned}
\end{equation}
where $\mat{I}\in\mathbb{R}^{R_n^2\times R_n^2}$ denotes the identity matrix.

\subsubsection{Update of $\tensor{M}^{(n,i)}$.}
For $i=1,2,3$, the augmented Lagrangian functions w.r.t. $[\tensor{M}]$ is expressed as
\begin{equation}
\label{LM_o}
\begin{aligned}
L(\tensor{M}^{(n,i)})&=\frac{\mu}{2}\big\Vert \tensor{M}^{(n,i)}-\tensor{G}^{(n)}+\frac{1}{\mu}\tensor{Y}^{(n,i)} \big\Vert_F^2\\&+ \big\Vert \mat{M}^{(n,i)}_{(i)}\big \Vert_*+C_{\tensor{M}}.
\end{aligned}
\end{equation}
The above formulation has a closed-form \cite{cai2010singular}, which is given by
\begin{equation}
\label{LM_u}
\begin{aligned}
\tensor{M}^{(n,i)}=\text{fold}_i\Big(D_{\frac{1}{\mu}}\big({\mat{G}^{(n)}_{(i)}}-\frac{1}{\mu}\mat{Y}^{(n,i)}_{(i)}\big)\Big),
\end{aligned}
\end{equation}
where $D_{\beta}(\cdot)$ is the singular value thresholding (SVT) operation, e.g., if $\mat{U}\mat{S}\mat{V}^T$ is the singular value decomposition of matrix $\mat{A}$, then $D_\beta(\mat{A})=\mat{U}max\{\mat{S}-\beta \mat{I},0\}\mat{V}^T$.

\subsubsection{Update of $\tensor{X}$.}
The augmented Lagrangian functions w.r.t. $\tensor{X}$ is given by
\begin{equation}
\label{LX_o}
\begin{aligned}
L&(\tensor{X})=\frac{\lambda}{2}\big\Vert \tensor{X}-\Psi([\tensor{G}]) \big\Vert^2_F+C_{\tensor{X}}, \\&s.t. \ P_\Omega(\tensor{X})=P_\Omega(\tensor{T}),
\end{aligned}
\end{equation}
which is equivalent to the tensor decomposition based model in (\ref{tdc}). The expression for $\tensor{X}$ is updated by inputing the observed values in the corresponding entries, and by approximating the missing entries by updated TR factors $[\tensor{G}]$ for every iteration, i.e.,
\begin{equation}
\label{LX_u}
\begin{aligned}
\tensor{X}=P_{\Omega}(\tensor{T})+P_{\bar{\Omega}}(\Psi([\tensor{G}])),
\end{aligned}
\end{equation}
where $\bar{\Omega}$ is the set of indices of missing entries which is a complement to $\Omega$.

\subsubsection{Update of $\tensor{Y}^{(n,i)}$.}
For $n=1,\ldots,N$ and $i=1,2,3$, the Lagrangian multiplier $ \tensor{Y}^{(n,i)}$ is updated as
\begin{equation}
\label{LY_u}
\begin{aligned}
 \tensor{Y}^{(n,i)}=\tensor{Y}^{(n,i)}+\mu\big(\tensor{M}^{(n,i)}-\tensor{G}^{(n)}\big).
\end{aligned}
\end{equation}
In addition, the penalty term of  the Lagrangian functions $L$ is restricted by $\mu$ which is also updated for every iteration by $\mu=max\{\rho\mu,\mu_{max}\}$, where $1<\rho<1.5$ is a tuning hyper parameter.

The ADMM based solving scheme is updated iteratively based on the above equations. Moreover, we consider to set two optimization stopping conditions: (\textrm{i}) maximum number of iterations $k_{max}$ and (\textrm{ii}) the difference between two iterations (i.e., $\Vert \tensor{X}-\tensor{X}_{last} \Vert_F / \Vert  \tensor{X}\Vert_F$) which is thresholded by the tolerance $tol$. The implementation process and hyper-parameter selection of TRLRF is summarized in Algorithm 1. It should be noted that our TRLRF model is non-convex, so the convergence to the global minimum cannot be theoretically guaranteed. However, the convergence of our algorithm can be verified empirically (see experiment details in Figure \ref{conver}). Moreover, the extensive experimental results in the next section also illustrate the stability and effectiveness of TRLRF. 
\begin{figure}[htb]
\begin{center}
\includegraphics[width=1\linewidth]{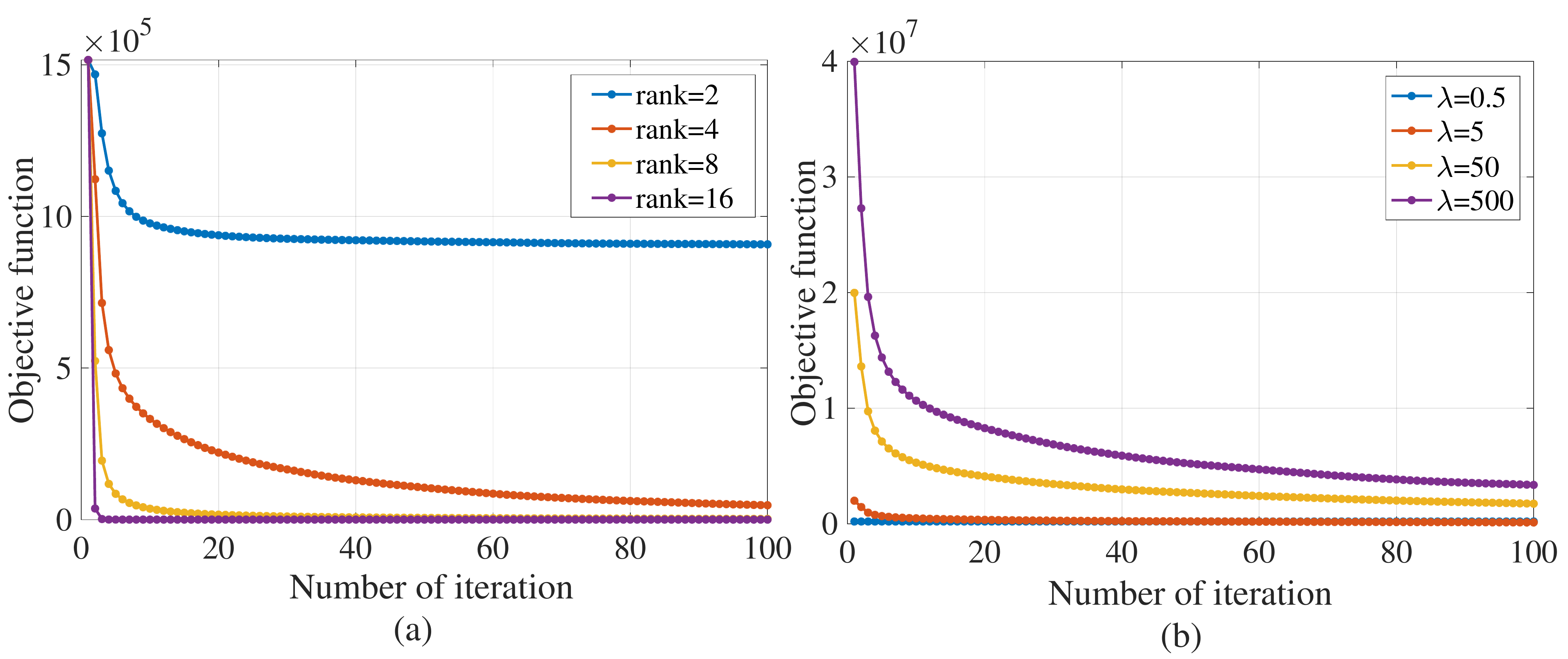}
\end{center}
\caption{Illustration of convergence property for TRLRF under different hyper-parameter choices. A synthetic tensor with TR structure (size $7\times8\times 7\times8$ with TR-rank \{4,4,4,4\}, missing rate 0.5) is tested. The experiment records the change of the objective function values along the number of iterations. Each independent experiment is conducted 100 times and the average results are shown in the graphs. Panels (a) and (b) represent the convergence curve when TR-rank and $\lambda$ are changed respectively.}
\label{conver}
\end{figure}
\begin{table}[h]
\centering
\begin{center}
\begin{tabular}{l}
\hline
\textbf{Algorithm 1.}  Tensor ring low-rank factors (TRLRF) \\
\hline
1: \textbf{Input}: $P_{\Omega}(\tensor{T})$, initial TR-rank $\{R_n\}_{n=1}^N$ . \\
2: \textbf{Initialization}: For $n=1,\ldots,N$, $i=1,2,3$, \\\quad random sample $\tensor{G}^{(n)}$ by distribution $N\sim(0,1)$, \\\quad$\tensor{Y}^{(n,i)}=0$, $\tensor{M}^{(n,i)}=0$, $\lambda=5$, $\mu^0=1$, $\mu_{max}$ \\\quad$=10^2, \rho=1.01$, $tol=10^{-6}$, $k=0$, $k_{max}=300$.  \\
3:\; \textbf{For} $k=1$ to $k_{max}$ \textbf{do}\\
4:\;\;\;\; $\tensor{X}_{last}=\tensor{X}$. \\
5:\;\;\;\; Update $\{\tensor{G}^{(n)}\}_{n=1}^N$ by (\ref{LG_u}). \\
6:\;\;\;\; Update $\{\tensor{M}^{(n,i)}\}_{n=1,i=1}^{N,3}$ by (\ref{LM_u}). \\
7:\;\;\;\; Update $\tensor{X}$ by (\ref{LX_u}). \\
8:\;\;\;\; Update $\{ \tensor{Y}^{(n,i)}\}_{n=1,i=1}^{N,3}$ by (\ref{LY_u}). \\
9:\;\;\;\; $\mu=max(\rho\mu,\mu_{max})$ \\
6:\;\;\;\; \textbf{If} $\Vert \tensor{X}-\tensor{X}_{last} \Vert_F / \Vert  \tensor{X}\Vert_F<tol$, \textbf{break} \\
7: \;\textbf{End for}  \\
8: \;\textbf{Output}: completed tensor $\tensor{X}$ and TR factors $[\tensor{G}]$. \\
\hline
\end{tabular}
\end{center}
\end{table}

\subsection{Computational complexity}

We analyze the computational complexity of our TRLRF algorithm as follows. For a tensor $\tensor{X}\in\mathbb{R}^{I_1\times I_2\times \cdots\times I_N}$, the TR-rank is set as $R_1=R_2=\cdots=R_N=R$, then the computational complexity of updating $[\tensor{M}]$ represents mainly the cost of SVD operation, which is $\tensor{O}(\sum_{n=1}^N2I_nR^3+I_n^2R^2)$. The computational complexities incurred calculating $\mat{G}_{<2>}^{(\neq n)}$ and updating $[\tensor{G}]$ are $\tensor{O}(NR^3\prod_{i=1, i\neq n}^N I_i)$ and $\tensor{O}(NR^2\prod_{i=1}^N I_i+NR^6)$, respectively. If we assume $I_1=I_2=\cdots=I_N=I$, then overall complexity of our proposed algorithm can be written as $\tensor{O}(NR^2I^N+NR^6)$.

Compared to HaLRTC and TRALS which are the representative of the nuclear-norm-based and the tensor decomposition based algorithms, the computational complexity of HaLRTC is $\tensor{O}(NI^{N+1})$. Since TRALS is based on ALS method and TR decomposition, its computational complexity is $\tensor{O}(PNR^4I^N+NR^6)$, where $P$ denotes the observation rate. We can see that the computational complexity of our TRLRF is similar to that of the two related algorithms. However, the desirable characteristic of rank selection robustness of our algorithm can help relieve the workload for model selection in practice, and thus the computational cost can be reduced. Moreover, though the computational complexity of TRLRF is of high power in $R$, due to the high representation ability and flexibility of TR decomposition, the TR-rank is always set as a small value. In addition, from experiments, we find out that our algorithm is capable of working efficiently for high-order tensors so that we can tensorize the data to a higher-order tensor and choose a small TR-rank to reduce the computational complexity. 

\section{Experimental Results}

\subsection{Synthetic data}
We first conducted experiments to testify the rank robustness of our algorithm by comparing TRALS, TRWOPT, and our TRLRF. To verify the performance of the three algorithms, we tested two tensors of size $20\times20\times20\times20$ and $7 \times8\times7\times8\times7\times8$. The tensors were generated by TR factors of TR-ranks $\{6,6,6,6\}$ and $\{4,4,4,4,4,4\}$ respectively. The values of the TR factors were drawn from i.i.d. Gaussian distribution $N \sim (0,0.5)$. The observed entries of the tensors were randomly removed by a missing rate of $0.5$, where the missing rate is calculated by $1-M/\text{num}(\tensor{T}_{real})$ and $M$ is the number of sampled entries (i.e., observed entries). We recorded the completion performance of the three algorithms by selecting different TR-ranks. The evaluation index was RSE which is defined by $\text{RSE}=\Vert  \tensor{T}_{real}-\tensor{X} \Vert_F / \Vert \tensor{T}_{real}\Vert_F$, where $\tensor{T}_{real}$ is the known tensor with full observations and $\tensor{X}$ is the recovered tensor calculated by each tensor completion algorithm. The hyper-parameters of our TRLRF were set according to Algorithm 1. All the hyper-parameters of TRALS and TRWOPT are set according to the recommended settings in the corresponding papers to get the best results. 

\begin{figure}[htb]
\begin{center}
\includegraphics[width=0.9\linewidth]{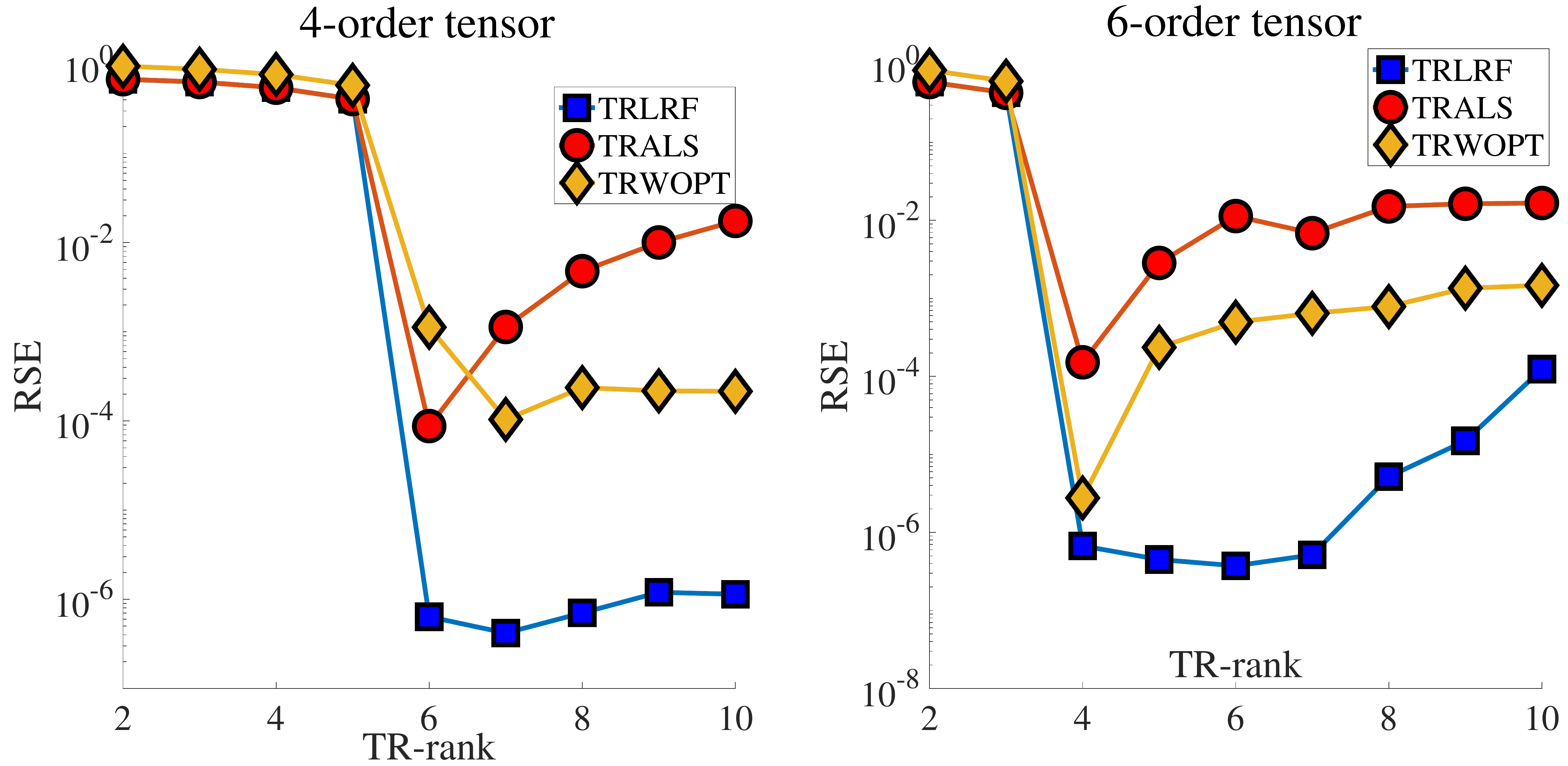}
\end{center}
\caption{Completion performance of three TR-based algorithms in the synthetic data experiment. The RSE values of different selected TR-ranks are recorded. The missing rate of the two target tensors is $0.5$ and the real TR-ranks are 6 and 4 respectively.}
\label{rs_fig}
\end{figure}

Figure \ref{rs_fig} shows the final RSE results which represent the average values of 100 independent experiments for each case. From the figure, we can see that all the three algorithms had their lowest RSE values when the real TR-ranks of the tensors were chosen and the best performance was obtained from our TRLRF. Moreover, when the TR-rank increased, the performance of TRLRF remained stable while the performance of the other two compared algorithms fell drastically. This indicates that imposing low-rankness assumption on the TR factors can bring robustness to rank selection, which largely alleviates the model selection problem in the experiments.

\subsection{Benchmark images inpainting}

\begin{figure}[htb]
\begin{center}
\includegraphics[width=0.7\linewidth]{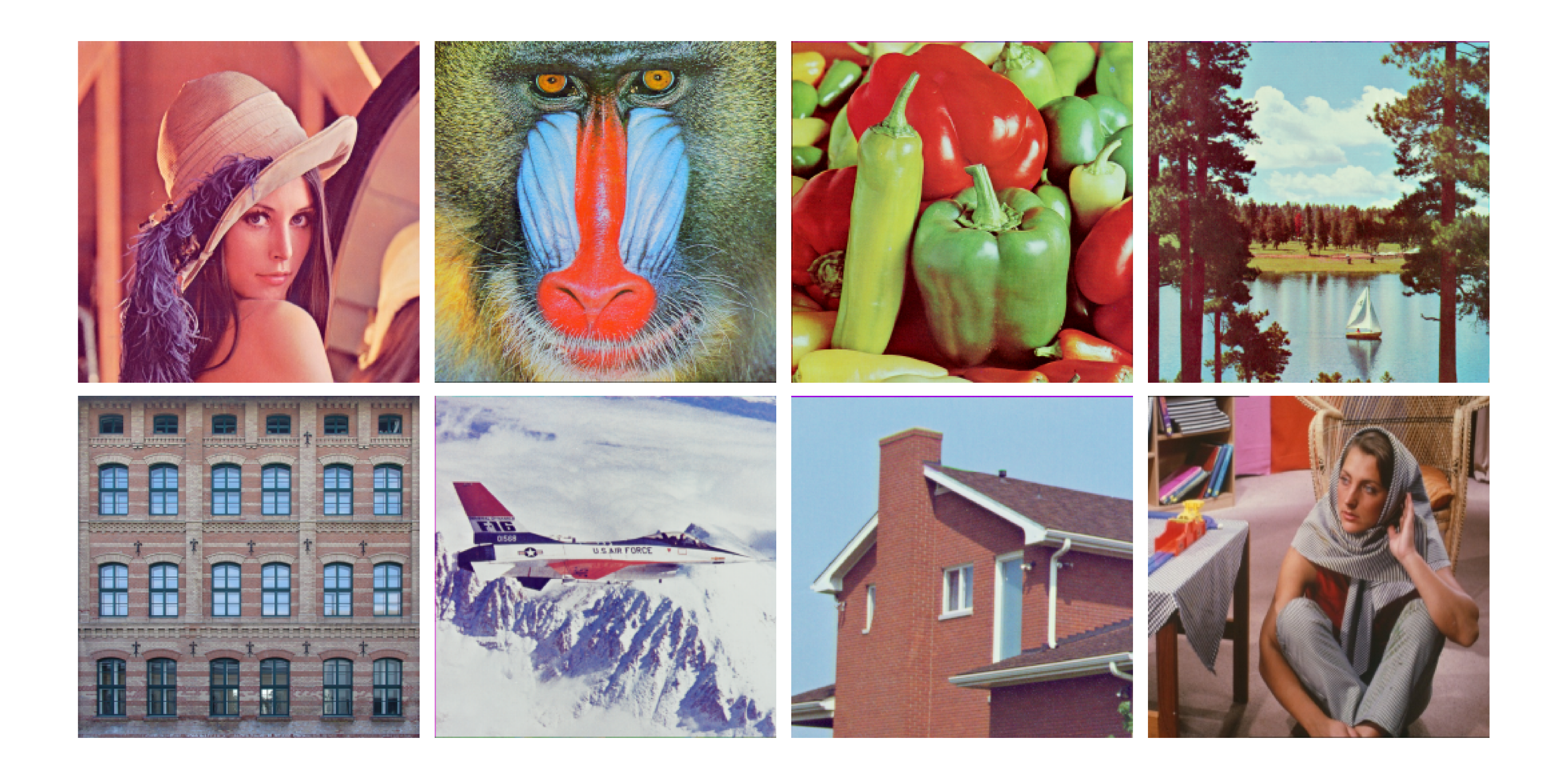}
\end{center}
\caption{The eight benchmark images. The first image is named ``Lena" and is used in the next two experiments.}
\label{bm_fig}
\end{figure}

\begin{figure}[htb]
\begin{center}
\includegraphics[width=0.95\linewidth]{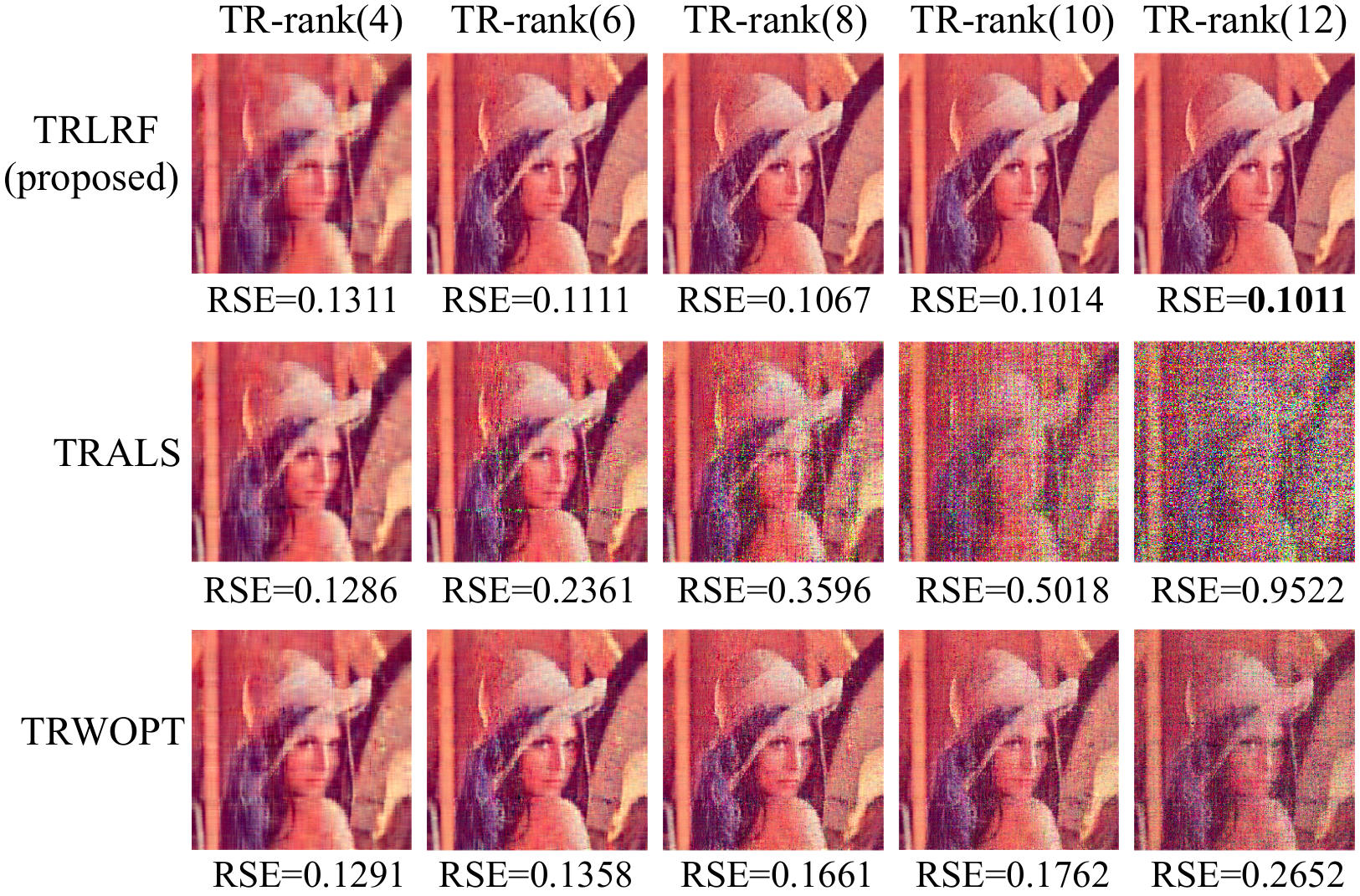}
\end{center}
\caption{Visual completion results of the TRLRF (proposed), TRALS, and TRWOPT on image ``Lena'' with different TR-ranks, when the missing rate is $0.8$. The selected TR-ranks are 4, 6, 8, 10, 12 respectively, from the first column to the last column. The RSE results are noted under each picture.}
\label{img_robust}
\end{figure}

\begin{figure}[htb]
\begin{center}
\includegraphics[width=1\linewidth]{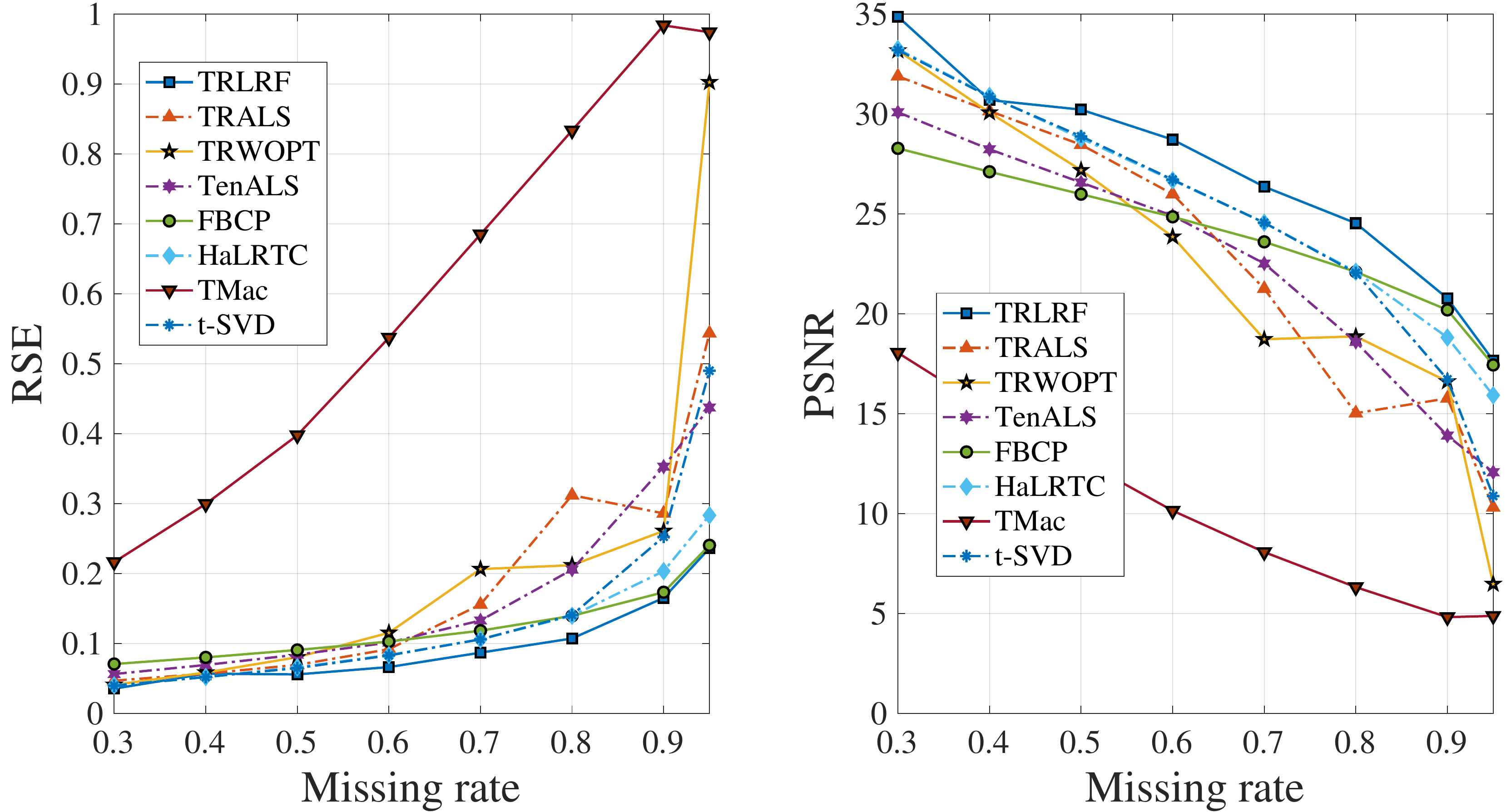}
\end{center}
\caption{Average completion performance of the eight considered algorithms, under different data missing rates.}
\label{img_compare}
\end{figure}

In this section, we tested our TRLRF against the state-of-the-art algorithms on eight benchmark images which are shown in Figure \ref{bm_fig}. The size of each RGB image was $256\times\ 256 \times3$ which can be considered as a three-order tensor. For the first experiment, we continued to verify the TR-rank robustness of TRLRF on the image named ``Lena". Figure \ref{img_robust} shows the completion results of TRLRF, TRALS, and TRWOPT when different TR-ranks for each algorithm are selected. The missing rate of the image was set as $0.8$, which is the case that the TR decompositions are prone to overfitting. From the figure, we can see that our TRLRF gives better results than the other two TR-based algorithms in each case and the highest performance was obtained when the TR rank was set as 12. When TR-rank increases, the completion performance of TRALS and TRLRF decreases due to redundant model complexity and overfitting of the algorithms, while our TRLRF shows better results even the selected TR-rank is larger than the desired TR-rank.

In the next experiment, we compared our TRLRF to the two TR-based algorithm, TRALS and TRWOPT, and the other state-of-the-art algorithms, i.e., TenALS \cite{jain2014provable}, FBCP \cite{zhao2015bayesian}, HaLRTC \cite{liu2013tensor}, TMac \cite{xu2013parallel} and t-SVD \cite{zhang2014novel}. We tested these algorithms on all the eight benchmark images and for different missing rates: $0.3, 0.4, 0.5, 0.6, 0.7, 0.8, 0.9$ and $0.95$. The relative square error (RSE) and peak signal-to-noise ratio (PSNR) were adopted for the evaluation of the completion performance. For RGB image data, PSNR is defined as
$\text{PSNR}=10\log_{10}(255^2/\text{MSE})$ where MSE is calculated by $\text{MSE}=\Vert \tensor{T}_{real}-\tensor{X} \Vert_F^2/\text{num}(\tensor{T}_{real})$, and num($\cdot$) denotes the number of element of the fully observed tensor. 

For the three TR-based algorithms, we assumed the TR-ranks were equal for every core tensor (i.e., $R_1=R_2=\ldots=R_N$). The best completion results for each algorithm were obtained by selecting best TR-ranks for the TR-based algorithms by a cross-validation method. Actually, finding the best TR-rank to obtain the best completion results is very tedious. However, this is much easier for our proposed algorithm because the performance of TRLRF is fairly stable even though the TR-rank is selected from a wide large. For the other five compared algorithms, we tuned the hyper-parameters according to the suggestions of each paper to obtain the best completion results. Finally, we show the average performance of the eight images for each algorithm under different missing rates by line graphs. Figure \ref{img_compare} shows the RSE and PSNR results of each algorithm. The smaller RSE value and the larger PSNR value indicate the better performance. Our TRLRF performed the best among all the considered algorithms in most cases. When the missing rate increased, the completion results of all the algorithms decreased, especially when the missing rate was near $0.9$. The performance of most algorithm fell drastically when the missing rate was $0.95$. However, the performance of TRLRF, HaLRTC, and FBCP remained stable and the best performance was obtained from our TRLRF.

\begin{table*}[h]
\centering
\caption{HSI completion results (RSE) under three different tensor orders with different rank selections}
\label{hsi}
\footnotesize
\begin{tabular}{c|c|c|c|c|c|c|c|c}
\hline
& TRLRF  & TRALS & TRWOPT & TMac & TenALS & t-SVD & FBCP &HaLRTC \\
\hline
$\text{3-order}, \text{high-rank} (R_n=12)$&\textbf{0.06548} &0.07049 &0.06695 &0.1662&0.3448&0.4223 &0.2363&0.1254\\
$\text{3-order}, \text{low-rank} (R_n=8)$ &\textbf{0.1166} &0.1245 &0.1249 &0.2963&0.3312&- &-&-\\
$\text{5-order}, \text{high-rank} (R_n=22)$ &\textbf{0.1035}&0.1392&0.1200 &0.8064&-&0.9504 &0.3833&0.3944\\
$\text{5-order}, \text{low-rank} (R_n=18)$ &\textbf{0.1062}&0.1122&0.1072 &0.7411&-&- &-&-\\
$\text{8-order}, \text{high-rank} (R_n=24)$ &\textbf{0.1190}&0.1319 &0.1637&0.9487&-&0.9443 &0.4021 &0.9099\\
$\text{8-order}, \text{low-rank} (R_n=20)$ &\textbf{0.1421}&0.1581 &0.1767&0.9488&-&0.9450 &0.4135 &0.9097\\
\hline
\end{tabular}
\end{table*}

\begin{figure*}[h]
\begin{center}
\includegraphics[width=0.95\linewidth]{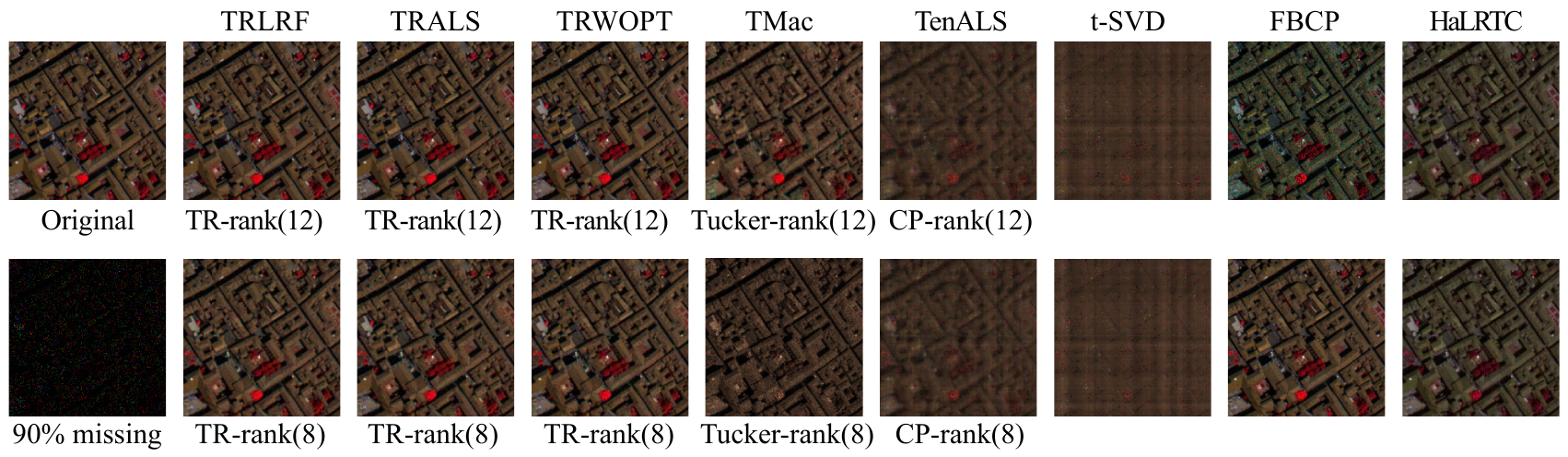}
\end{center}
\caption{Completion results under the $0.9$ missing rate HSI data. The channels 80, 34, 9 are picked to show the visual results. The rank selection of TRLRF, TRALS, TRWOPT, TMac and TenALS are given under the corresponding images.}
\label{hsi_img}
\end{figure*}

\subsection{Hyperspectral image}
A hyperspectral image (HSI) of size $200\times\ 200\times80$ which records an area of the urban landscape was tested in this section\footnote{\url{http://www.ehu.eus/ccwintco/index.php/Hyperspectral_Remote_Sensing_Scenes}}. In order to test the performance of TRLRF on higher-order tensors, the HSI data was reshaped to higher-order tensors, which is an easy way to find more low-rank features of the data. We compared our TRLRF to the other seven tensor completion algorithms in 3-order tensor ($200\times\ 200\times80$), 5-order tensor ($10\times 20\times 10 \times 20 \times 80$) and 8-order tensor ($8\times 5\times 5 \times 8 \times 5\times 5\times  8\times10$) cases. The higher-order tensors were generated from original HSI data by directly reshaping it to the specified size and order.

The experiment aims to verify the completion performance of the eight algorithms under different model selection, whereby the experiment variables are the tensor order and tensor rank. The missing rates of all the cases are set as $0.9$. All the tuning parameters of every algorithm were set according to the statement in the previous experiments. Besides, for the experiments which need to set rank manually, we chose two different tensor ranks: high-rank and low-rank for algorithms. It should be noted that the CP-rank of TenALS and the Tucker-rank of TMac were set to the same values as TR-rank. The completion performance of RSE and visual results are listed in Table \ref{hsi} and shown in Figure \ref{hsi_img}. The results of FBCP, HaLRTC and t-SVD were not affected by tensor rank, so the cases of the same order with different rank are left blank in Table \ref{hsi}. The TenALS could not deal with tensor more than three-order, so the high-order tensor cases for TenALS are also left blank. As shown in Table \ref{hsi}, our TRLRF gives the best recovery performance for the HSI image. In the 3-order cases, the best performance was obtained when the TR-rank was 12, however, when the rank was set to 8, the performance of TRLRF, TRALS, TRWOPT, TMac, and TenALS failed because of the underfitting of the selected models. For 5-order cases, when the rank increased from 18 to 22, the performance of TRLRF kept steady while the performance of TRALS, TRWOPT, and TMac decreased. This is because the high-rank makes the models overfit while our TRLRF performs without any issues, owing to its inherent TR-rank robustness. In the 8-order tensor cases, similar properties can be obtained and our TRLRF also performed the best.

\section{Conclusion}
We have proposed an efficient and high-performance tensor completion algorithm based on TR decomposition, which employed low-rank constraints on the TR latent space. The model has been efficiently solved by the ADMM algorithm and it has been shown to effectively deal with model selection which is a common problem in most traditional tensor completion methods, thus providing much lower computational cost. The extensive experiments on both synthetic and real-world data have demonstrated that our algorithm outperforms the state-of-the-art algorithms. Furthermore, the proposed method is general enough to be extended to various other tensor decompositions in order to develop more efficient and robust algorithms.

\bibliographystyle{aaai}
\bibliography{paper}

\end{document}